\documentclass[10pt]{article}

\usepackage[letterpaper,margin=1in]{geometry}
\usepackage{mathptmx}                 
\usepackage[T1]{fontenc}
\usepackage[utf8]{inputenc}
\usepackage{microtype}
\usepackage{amsmath,amssymb,amsthm}
\usepackage{booktabs}
\usepackage{array}
\usepackage{enumitem}
\usepackage{xcolor}
\usepackage[numbers,sort&compress]{natbib}
\usepackage[colorlinks=true,linkcolor=blue!60!black,citecolor=blue!60!black,urlcolor=blue!60!black]{hyperref}
\usepackage{caption}
\setlist{nosep,leftmargin=1.4em}
\newcolumntype{P}[1]{>{\raggedright\arraybackslash}p{#1}}

\theoremstyle{definition}
\newtheorem{definition}{Definition}
\theoremstyle{plain}
\newtheorem{proposition}{Proposition}

\theoremstyle{remark}
\newtheorem{remark}{Remark}

\newcommand{\Rec}{\mathcal{D}}            
\newcommand{\Rep}{T}                      
\newcommand{\Qs}{\mathcal{Q}}             
\newcommand{\Ent}{H}
\newcommand{\MI}{I}
\newcommand{\Var}{\operatorname{Var}}
\newcommand{\E}{\mathbb{E}}
\newcommand{\Prob}{\mathbb{P}}
\newcommand{\llm}{LLM}
\newcommand{\lqm}{LQM}

\title{\vspace{-1.5em}\textbf{Language Is an Insufficient Substrate for Quantitative Reasoning, and Consequential Domains Need Large Quantitative Models}}

\author{%
  Reuben Vandeventer\\
  Duo Dimensio LLC\\
  \and
  David Imrem\\
  Duo Dimensio LLC\\
  \and
  David J. Wild\thanks{Corresponding author: \texttt{djwild@iu.edu}.}\\
  Luddy School of Informatics, Computing and Engineering, Indiana University\\
  Duo Dimensio LLC
}
\date{Preprint. September 2026.}

\begin{document}
\maketitle

\begin{abstract}
\noindent
The prevailing assumption in applied machine learning is that progress on consequential quantitative decisions such as pricing risk, allocating capital, triaging patients, or containing a network intrusion will follow from progress in large language models (\llm{}s). This paper takes the position that the assumption is mistaken, and that the mistake is structural rather than a matter of present capability. A language model is trained on a representation of the world that was produced by human description; description is a lossy encoding of the quantitative record, and the loss is irreversible: no downstream model, at any scale, can recover from a description what the description did not encode. We formalize this as a property of the \emph{representation} on which a model is trained rather than of the model's capacity, and we identify three further properties that consequential settings demand of a model and that a language substrate cannot supply by construction: reproducibility, lineage from every output back to the source records that produced it, and calibrated uncertainty. We argue that these properties define a distinct model class, which we call the \emph{Large Quantitative Model} (\lqm{}): a foundation-class system trained on a domain's own quantitative record in its native form, holding an explicit and inspectable representation of that domain, with provenance preserved through every transformation and uncertainty quantified by construction. We situate the class against its nearest relatives including tabular and time-series foundation models, world models, and classical single-task quantitative models, and describe one instantiation organized around explicit topological structure, learned dynamics over that structure, and a language model repositioned to take on the role of a human interface. We take the strongest counterarguments seriously, including scaling, fine-tuning, tool use and retrieval, and conclude that each addresses the interface to the quantitative record rather than the substrate. We close with a research agenda for the community.
\end{abstract}

\section{Introduction}
\label{sec:intro}

A language model is a statistical model of how people have written about the world. That is the source of its remarkable breadth, and it is also a precise statement of what it is not: a language model is not a statistical or structural model of the world itself. For many tasks the distinction does not matter, because the task is itself linguistic. For a large and economically central class of tasks it matters decisively, because the object of the decision is not a description but a quantitative record: a loss triangle, a covariance structure, an authentication graph, a dose-response surface, a stream of sensor readings, and so on; and the decision must be defended by reference to that record.

Our position is that for this class of tasks language is an \emph{insufficient substrate}, in a sense we make exact, and that the field should recognize and build a model class native to the quantitative record. We advance two claims, one negative and one constructive.

\paragraph{The negative claim.} Fidelity, reproducibility, lineage and calibration are properties of the \emph{representation} a model is trained on and of the \emph{map} it computes, not of its parameter count or its training budget. When a quantitative record is compressed into natural language, fidelity is lost before any model sees the data, and the loss is irrecoverable in an information-theoretic sense (Section~\ref{sec:fidelity}). When a model produces its output by sampling from a distribution over token sequences, reproducibility is lost by construction (Section~\ref{sec:repro}). When a model's representation of a domain is held implicitly in weights, lineage from output to source is not a query but an inference (Section~\ref{sec:lineage}). And when a model's expression of confidence is itself generated text, calibration must be imposed from outside (Section~\ref{sec:calib}). None of these is repaired by scale, because none is a function of scale.

\paragraph{The constructive claim.} The same four properties, stated positively, define a model class. A \emph{Large Quantitative Model} is (i)~\emph{substrate-native}: trained on the domain's own quantitative record in its native tabular, relational, temporal and high-dimensional form; (ii)~\emph{structurally explicit}: holding its representation of the domain as an inspectable object rather than only in weights; (iii)~\emph{lineage-complete}: with a computable map from every output to the source records that produced it, preserved under every transformation; and (iv)~\emph{uncertainty-calibrated}: emitting predictions with distribution-free coverage guarantees and an explicit abstention state. We argue in Section~\ref{sec:lqm} that this is a genuine class rather than a relabelling of existing quantitative practice, that it is distinct from tabular and time-series foundation models and from world models while sharing their central insight, and that it is buildable: Section~\ref{sec:instantiation} describes one instantiation, and Section~\ref{sec:evidence} gives evidence from a deployed system in which a language model is confined to the interface while a structured quantitative model does the reasoning.

\paragraph{What the position is not.} We do not argue that language models are weak, unhelpful in these domains, or destined for irrelevance. We argue for their correct placement. We agree that the ability of language models to ingest vast amounts of information and process across it enables the model to assume properties that appear quantitative to the human. The architecture we advocate is hybrid: the language model translates human intent into well-formed queries and renders quantitative conclusions as narrative, and a model trained on the domain's own record does the reasoning in between. The language model speaks; the quantitative model knows. Confusing the two roles is, we believe, the dominant reason that enterprise deployments of generative models in quantitative settings have so often failed to produce measurable results.

\paragraph{Why this is a position paper.} The claims here are partly formal, partly empirical and partly a matter of research strategy. The formal core (Section~\ref{sec:substrate}) is elementary and, we think, uncontroversial once stated; what is contested is its implication, which is that the field's current allocation of effort toward making language models safe for quantitative decisions is misdirected relative to building models whose substrate makes them safe by construction. That is a claim about priorities, and we make it as such. Section~\ref{sec:objections} states the strongest counterarguments and our responses. Section~\ref{sec:agenda} sets out what we believe the community should do.

\section{Preliminaries: four properties of a representation}
\label{sec:prelim}

We fix vocabulary that the rest of the paper depends on. Let a \emph{domain record} $\Rec$ be the primitive quantitative data a domain generates: for an insurer, the policies, exposures, claims transactions and treaty terms; for a hospital, the encounters, orders, results and outcomes; for a network defender, the authentication and flow events; for a drug-discovery program, the compounds, assays and activities. Let a \emph{representation} $\Rep(\Rec)$ be any transformation of the record on which a model is trained or over which it reasons. A model is then a map from a representation and a query to an output.

Four properties will concern us. Each is stated as a property of a representation or of a map, so that it can be asked of any model class without prejudice.

\begin{definition}[Fidelity]
\label{def:fidelity}
Let $\Qs$ be a family of queries, each $q\in\Qs$ a function of the record. A representation $\Rep$ is \emph{fidelity-preserving for $\Qs$} if for every $q\in\Qs$, $q(\Rec)$ is a function of $\Rep(\Rec)$; equivalently, in the probabilistic setting, $\Ent\big(q(\Rec)\,\big|\,\Rep(\Rec)\big)=0$.
\end{definition}

Fidelity is relative to a query family, and deliberately so. A representation may be perfectly faithful for one family and useless for another; the summary ``loss development was adverse'' is faithful to the query ``was development adverse?'' and unfaithful to every query about the development triangle itself.

\begin{definition}[Reproducibility]
\label{def:repro}
A system is \emph{reproducible} if its output is a deterministic function of its input, its configuration $\theta$ and its release version: fixing all three, independent executions return identical outputs.
\end{definition}

Reproducibility concerns the map at the time of use, not the process that produced the map. A model may have been trained by a stochastic procedure and still be reproducible in this sense, provided the trained map is frozen, versioned and applied deterministically.

\begin{definition}[Lineage]
\label{def:lineage}
A system is \emph{lineage-complete} if there exists a computable map from every reported output to the set of source records that contributed to it, and that map is preserved under every transformation in the pipeline.
\end{definition}

This is the notion of \emph{where-} and \emph{why-provenance} from the database literature \citep{buneman2001,cheney2009}, carried through model inference rather than stopping at the query layer. We distinguish it sharply from post-hoc attribution: an attribution method estimates which inputs influenced an output; a lineage map \emph{returns} them. The first is an inference about the model; the second is a query against it.

\begin{definition}[Calibration with abstention]
\label{def:calib}
A predictive system is \emph{calibrated} if its stated uncertainty has the frequency interpretation it claims, a nominal $1-\alpha$ prediction set covers the truth with probability at least $1-\alpha$ \citep{vovk2005}, and it \emph{abstains} when the evidence available for a query falls below a stated threshold, rather than reporting a value.
\end{definition}

Two remarks about scope. First, these definitions are indifferent to architecture. Second, they are jointly what supervision of consequential models has come to require. For example, model-risk guidance for financial institutions asks that a model be conceptually sound, that its results be replicable by an independent party, that data flows be documented end to end, and that uncertainty be characterized \citep{sr117}; guidance for insurers' use of AI names data lineage and testing explicitly \citep{naic2023}; the European Union's horizontal AI regulation requires record-keeping, logging and technical documentation sufficient to trace a high-risk system's behavior \citep{euaiact2024}; and regulators of machine-learning medical devices have jointly articulated principles requiring representative data, transparent performance and monitored deployment \citep{gmlp2021}. We return to this in Section~\ref{sec:governance}. The point for now is that the four properties are not a wish list assembled to favour a conclusion; they are, in substance, what the institutions that must answer for consequential decisions already ask of the models that inform them.

\section{Why language is an insufficient substrate}
\label{sec:substrate}

We now argue that a language substrate fails each property by construction. The argument is not that language models are inaccurate on quantitative tasks, although they often are \citep{mirzadeh2025,dziri2023}; empirical inaccuracy can in principle be trained away. The argument is that each failure is located in the representation or in the map, where training cannot reach.

\subsection{Fidelity: description is an irreversible compression}
\label{sec:fidelity}

Consider the record $\Rec$, a description $\Rep(\Rec)$ of it in natural language, and any model output $Y$ produced from the description. Whatever the model, $\Rec \to \Rep(\Rec) \to Y$ is a Markov chain: the model sees the record only through the description. The data-processing inequality \citep[Thm.~2.8.1]{cover2006} then gives $\MI(\Rec;Y)\le\MI(\Rec;\Rep(\Rec))$, and the same holds for any function of the record. The consequence for queries is immediate.

\begin{proposition}[Irrecoverability under lossy description]
\label{prop:irrecoverability}
Let $q$ be a query taking values in a finite set of size $m\ge 2$, and suppose the description is not fidelity-preserving for $q$, so that $\Ent\big(q(\Rec)\,\big|\,\Rep(\Rec)\big)=\eta>0$. Then for every model $Y=g\big(\Rep(\Rec),\xi\big)$, where $\xi$ is any internal randomness independent of $\Rec$, the probability of error satisfies
\[
\Prob\big(Y\neq q(\Rec)\big)\;\ge\;\frac{\eta-1}{\log_2(m-1)} .
\]
In particular, if $\eta>1$ bit, no model built on the description answers $q$ correctly with certainty, and the bound does not depend on $g$.
\end{proposition}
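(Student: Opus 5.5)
The proof combines Fano's inequality with the data-processing inequality already invoked above. Write $X=q(\Rec)$, $T=\Rep(\Rec)$ and $Y=g(T,\xi)$. The plan has three steps: show that $X\to T\to Y$ is a Markov chain, lower-bound $\Ent(X\mid Y)$ by $\eta$, and then upper-bound $\Ent(X\mid Y)$ in terms of the error probability using Fano.

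\emph{Step 1 (Markov structure).} Both $X$ and $T$ are deterministic functions of $\Rec$, and $\xi$ is independent of $\Rec$. Hence $\xi$ is independent of the pair $(X,T)$, so the joint law factors as $p(x,t,\xi)=p(x,t)\,p(\xi)$. Since $Y$ is a function of $(T,\xi)$, it follows that $X$ and $Y$ are conditionally independent given $T$, i.e.\ $X\to T\to Y$. This is the only place the hypothesis on $\xi$ enters, and it is why the final bound cannot depend on $g$. I expect this step to be the main point requiring care. If $\xi$ were allowed to depend on $\Rec$ (for example, side information leaking the record), the chain would break and the bound would fail.

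\emph{Step 2 (conditional entropy cannot decrease).} Apply the data-processing inequality \citep[Thm.~2.8.1]{cover2006} to the chain: $\MI(X;Y)\le \MI(X;T)$. Subtracting both sides from $\Ent(X)$ gives
\[
\Ent(X\mid Y)\;\ge\;\Ent(X\mid T)\;=\;\eta .
\]

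\emph{Step 3 (Fano).} Let $P_e=\Prob(Y\neq X)$. Without loss of generality $Y$ takes values in the same $m$-element set as $X$; otherwise, map every out-of-range value of $Y$ to a fixed element. This map only removes errors of a particular kind and keeps $Y$ a function of $(T,\xi)$, so it can only reduce $P_e$. Fano's inequality \citep{cover2006} then gives
\[
\Ent(X\mid Y)\;\le\; h(P_e)+P_e\log_2(m-1)\;\le\;1+P_e\log_2(m-1),
\]
where $h$ is the binary entropy function. Combining with Step 2 yields $\eta\le 1+P_e\log_2(m-1)$, and rearranging gives the stated bound whenever $m\ge 3$.

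For $m=2$ the denominator $\log_2(m-1)$ vanishes, so this case needs a separate remark. Here $\eta\le \Ent(X)\le\log_2 2=1$, so the numerator satisfies $\eta-1\le 0$ and the bound makes no nontrivial claim. The ``in particular'' clause assumes $\eta>1$, which forces $m\ge 3$. In that regime the right-hand side is strictly positive, so $P_e>0$ for every $g$: no model built on the description answers $q$ correctly with certainty.
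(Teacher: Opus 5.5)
Your proof is correct and follows the same route as the paper's: the Markov chain $q(\Rec)\to\Rep(\Rec)\to Y$, the data-processing inequality to get $\Ent(q(\Rec)\mid Y)\ge\eta$, then Fano's inequality with $h_2\le 1$, and a rearrangement. Your additional steps fill in points the paper's one-line proof leaves implicit: deriving the Markov property from the independence of $\xi$, reducing to an estimator valued in the $m$-element alphabet so that the $\log_2(m-1)$ form of Fano applies, and noting that the displayed bound is degenerate when $m=2$ (so the ``in particular'' clause forces $m\ge 3$).
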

\begin{proof}
Since $q(\Rec)\to\Rep(\Rec)\to Y$ is a Markov chain, $\Ent(q(\Rec)\mid Y)\ge\Ent(q(\Rec)\mid\Rep(\Rec))=\eta$ by the data-processing inequality. Fano's inequality \citep[Thm.~2.10.1]{cover2006} gives $\Ent(q(\Rec)\mid Y)\le h_2(P_e)+P_e\log_2(m-1)$ with $P_e=\Prob(Y\ne q(\Rec))$ and $h_2$ the binary entropy, which is at most one bit. Combining and rearranging yields the bound.
\end{proof}

The proposition is elementary, and that is its force. The right-hand side is a function of the description and the query alone. It does not contain the model's parameter count, its training corpus size, its post-training regime or its inference-time compute. Scaling laws \citep{kaplan2020} describe how well a model approaches the best predictor \emph{of its training distribution}; they say nothing about information that the training distribution never contained. The loss triangle that an analyst summarized into a sentence is not in the sentence, and no model trained on sentences will recover it.

The limitation is older than the technology. \citet{moore1988} argued at length that language stands to the world as a map stands to terrain: it becomes usable only by reducing, grouping and selecting, and it disguises from within how much has been discarded. Their central illustration is quantitative in spirit: in the matching experiments reported by \citet{lehrer1983}, subjects described wines in words and others attempted to match the descriptions back to the samples; as Moore and Carling read the results, performance was often little better than chance, for experienced tasters as well as novices, because the loss occurred at the encoding and no expertise on the receiving end could restore it. \citet{bender2020} make a cognate argument for language models specifically: a system trained on form alone has no access to the meaning that form was used to convey. Our contribution is to observe that in quantitative domains the ``meaning'' at stake is a well-defined mathematical object, and its absence from the substrate can be quantified.

\paragraph{The in-context rejoinder.} A natural objection is that the record can simply be placed in the model's context at inference time, so that the model sees the data rather than descriptions of it. This is correct as far as it goes, and it is why language models are useful for small, self-contained quantitative tasks. It does not rescue the substrate for three reasons. First, the model's \emph{learned} representation of the domain---what it brings to the data---was still formed from descriptions, so the inductive biases it applies to the record in its context are those of a sequence model trained on prose, which differ materially from the biases that tabular and relational data are known to reward \citep{grinsztajn2022}. Second, the record of a consequential domain does not fit in a context window, and when a model runs out of context it interpolates; a mid-sized enterprise network generates authentication graphs of $10^4$--$10^5$ hosts and $10^6$ or more daily edges, and even records that nominally fit are used unevenly across the window \citep{liu2024lost,vallabhaneni2026}. Third, and most fundamentally, in-context provision addresses fidelity alone. The output is still sampled, the reasoning is still not a query against an explicit structure, and the lineage from output to source still runs through a mechanism that cannot be inspected. The remaining three properties fail regardless.

\subsection{Reproducibility: sampling is the mechanism, not a setting}
\label{sec:repro}

A language model's output is a sample from a learned distribution over token sequences. Setting the sampling temperature to zero yields a deterministic map only if the implementation is itself deterministic, which in practice depends on hardware, batching and numerical libraries; more importantly, the intermediate reasoning a model exposes is not a faithful trace of the computation that produced its answer. \citet{turpin2023} show that chain-of-thought explanations can systematically misrepresent the factors that drove a model's prediction, and \citet{jacovi2020} argue that the plausibility of an explanation is no evidence of its faithfulness. For a marketing draft none of this matters. For a reserve estimate, a capital allocation or a containment decision it is disqualifying: the requirement is not merely a correct answer but an answer that is the same answer tomorrow, that decomposes, and that can be reconstructed by an independent party from the same inputs \citep{sr117}.

The standard response has been to build a deterministic shell around the stochastic core: retrieval, guardrails, verifiers, logging of prompts and completions, and human review. This is engineering in good faith, and some of it is valuable. But when a system must be wrapped in an exoskeleton to become safe for its use, the honest engineering conclusion is that the substrate was wrong for the use. The properties the exoskeleton supplies are exactly the properties a quantitative substrate has natively.

\subsection{Lineage: attribution is not provenance}
\label{sec:lineage}

Lineage is the property that most sharply separates the classes, because it is structural rather than behavioural. A model whose representation of a portfolio is a vector of weights can be interrogated after the fact by attribution methods that estimate which inputs mattered. A model whose representation of the portfolio is an explicit structure whose elements are sets of named records can be interrogated by selecting an element. The first is an inference about the model, subject to the well-known instabilities of post-hoc explanation \citep{rudin2019}; the second is a query with a deterministic answer. Where the ``source data'' of a model is the compressed statistical residue of a very large text corpus, there is no lineage to recover: the question ``which records contributed to this number?'' has no mechanical answer because the number was not computed from records.

Retrieval-augmented generation \citep{lewis2020rag} partially restores lineage at the document level, and we regard it as a genuine step in the right direction. But it returns the documents that were retrieved, not the records from which a reported quantity was computed, and the computation between retrieval and output remains the generation mechanism of Section~\ref{sec:repro}.

\subsection{Calibration: verbalized confidence is generated text}
\label{sec:calib}

A language model can be asked how confident it is, and it will answer; the answer is more text. Empirical work finds that verbalized confidence is imperfectly coupled to correctness, with systematic overconfidence under common elicitation methods \citep{xiong2024}, and that even where a model's self-assessment is informative in distribution it degrades under shift \citep{kadavath2022}. This is unsurprising given the substrate: the model has learned how people express confidence, not how often they are right when they do. Calibration can be imposed from outside, by temperature scaling on held-out data \citep{guo2017} or by wrapping the model in a conformal procedure \citep{angelopoulos2023}. But this again supplies from outside a property that a model operating on the quantitative record can carry natively, together with the abstention state of Definition~\ref{def:calib}, which a language model has no natural way to enter: a model trained to continue text does not, by default, decline to continue.

\subsection{Empirical corroboration}

The formal argument predicts a characteristic failure mode: semantically coherent but quantitatively unreliable output, improving in fluency faster than in fidelity across model generations. This is what the empirical record shows. Controlled perturbations of numerical values and irrelevant clauses in grade-school arithmetic problems produce large accuracy drops in frontier models \citep{mirzadeh2025}; compositional tasks with well-defined algorithmic structure show performance collapsing with problem depth in a way consistent with pattern matching over surface form rather than execution of the underlying computation \citep{dziri2023}. In the setting closest to our own argument---tabular prediction on modest datasets---models designed for the statistical character of the data, whether gradient-boosted trees or purpose-built tabular transformers, continue to outperform general-purpose deep models and have motivated a distinct programme of tabular foundation models \citep{grinsztajn2022,hollmann2023,hollmann2025,vanbreugel2024}. At the level of practice, an industry survey of enterprise generative-AI deployments reported that the large majority produced no measurable business impact, with failure to integrate into existing workflows and to retain and use organizational context cited as recurring obstacles \citep{mitnanda2025}; we cite this as an indicator of practitioner experience rather than as a controlled finding, since its methodology has not been peer reviewed. None of this evidence is decisive on its own. Taken together with Section~\ref{sec:fidelity}, it is what one would expect if the substrate, not the model, were the binding constraint.

\section{What consequential settings require}
\label{sec:governance}

The industries and institutions where the economic stakes of quantitative decisions are highest are also those where the \emph{form} of the model is governed, not merely its outcomes. This is the part of the argument that the machine-learning community, whose benchmarks measure outcomes, most often underweights.

Table~\ref{tab:governance} summarizes how four governance regimes engage the four properties of Section~\ref{sec:prelim}. We have deliberately chosen regimes from different sectors: banking, insurance, medical devices and the EU's horizontal AI regulation. The mapping is conservative: we claim the properties are necessary for compliance in each regime, not that possessing them is sufficient.

\begin{table}[t]
\centering\small
\caption{Governance regimes and the representational properties they engage. Primary sources only. Emergency management and public-safety decision support are not governed by a comparable regime but face the same demands informally, through after-action review and liability.}
\label{tab:governance}
\begin{tabular}{P{3.3cm}P{6.6cm}P{5.3cm}}
\toprule
\textbf{Regime} & \textbf{Operative requirement} & \textbf{Property engaged} \\
\midrule
Model-risk guidance for US banks, SR 11-7 / OCC 2011-12 \citep{sr117} & Conceptual soundness; independent validation including outcomes analysis; documentation sufficient for a third party to understand and replicate; ongoing monitoring & Reproducibility (replication); lineage (data-flow documentation); fidelity (soundness of representation); calibration (outcomes analysis) \\
\addlinespace
NAIC Model Bulletin on insurers' use of AI systems \citep{naic2023} & Written AI-systems programme; governance over data including lineage and quality; testing; documentation and third-party oversight & Lineage (named explicitly); reproducibility (testing) \\
\addlinespace
EU Artificial Intelligence Act, high-risk obligations \citep{euaiact2024} & Technical documentation; data governance; record-keeping and logging; accuracy and robustness; human oversight & Lineage (logging and record-keeping); reproducibility; calibration (robustness) \\
\addlinespace
Good Machine Learning Practice for medical devices \citep{gmlp2021} & Representative data; independent test sets; performance transparent to users; monitored deployment & Fidelity (data representativeness); calibration; reproducibility \\
\bottomrule
\end{tabular}
\end{table}

Hold a stochastic language model against these requirements and the mismatch is stark. What is the conceptual soundness of next-token prediction as a theory of credit risk, of claim severity, of lateral movement through a network? How is a model validated whose training corpus is the written internet? How is an independent replication performed of a reasoning chain that does not reproduce run to run? How is an output attributed to source data when there is no lineage?

The constructive inversion is the heart of our position. The properties governance demands such as reproducibility, lineage, calibrated uncertainty, inspectable representation, are natural properties of a model built on the quantitative record and unnatural ones of a model built on language. A model trained on the actual loss data, with every transformation recorded, with uncertainty quantified by construction, does not need an exoskeleton. Governance is not the obstacle to machine learning in consequential domains. Governance is the specification for what the right model class must look like. The field has been trying to make the wrong class compliant instead of building the class that is compliant by construction.

\section{The precedent: domain-native substrates elsewhere in machine learning}
\label{sec:precedent}

Several of the most consequential research programmes of the past decade are, in effect, substrate arguments that have already been accepted.

\paragraph{World models.} The world-model program \citep{ha2018,lecun2022,bruce2024} holds that an agent which must reason about objects, forces and causality should be trained on observations in which objects, forces and causality are present, and should learn a predictive model of that environment's dynamics rather than a model of descriptions of it. The argument is not that language models are weak; it is that language is the wrong substrate for physical reasoning, and that amassing the real data of the domain and training on it yields reasoning that text cannot deliver. We ask the obvious question: why should this logic apply to physics and not to a portfolio, a patient population or an enterprise network? Each has dynamics as real as any physical system---heavy-tailed loss processes, dependency structures that propagate shock, regime changes that invalidate yesterday's calibration---observable only in quantitative data and learnable only from it. A model that has read every regulatory filing ever written has still never observed a covariance matrix evolve through a crisis.

\paragraph{Scientific foundation models.} Protein structure prediction was transformed not by a model trained on the literature about proteins but by one trained on sequences and structures \citep{jumper2021}. The lesson generalizes: where a domain has a native quantitative substrate, models trained on that substrate have repeatedly outperformed models trained on discourse about it.

\paragraph{Tabular and time-series foundation models.} Closer to our concern, a growing literature builds foundation models directly on tabular \citep{hollmann2023,hollmann2025} and time-series data \citep{ansari2024,das2024,woo2024}, motivated by the recognition that the statistical character of such data---non-stationarity, heavy tails, heterogeneous feature types, small samples---is not captured by architectures and objectives inherited from language. \citet{vanbreugel2024} argue, in a position paper we regard as a natural companion to this one, that tabular foundation models should be a research priority precisely because so much consequential data is tabular. We agree, and we go further: substrate-nativeness is necessary but not sufficient. A tabular foundation model can be substrate-native and still fail lineage and structural explicitness, because its representation of the domain lives in weights. Section~\ref{sec:lqm} makes the distinction precise.

\paragraph{Decoupling in agentic systems.} In cybersecurity, a domain where the state is a graph and the actions are consequential, the emerging design pattern is to remove the language model from the decision loop and confine it to the interface \citep{vallabhaneni2026}; Section~\ref{sec:evidence} reports this evidence in detail. The pattern is the same as the world-model migration in miniature: the community built structured models because language-based agents kept failing in environments they could describe perfectly.

\section{The position: Large Quantitative Models}
\label{sec:lqm}

\subsection{Definition}

\begin{definition}[Large Quantitative Model]
\label{def:lqm}
A \emph{Large Quantitative Model} is a foundation-class system for a quantitative domain that is
\begin{enumerate}[label=\textup{(P\arabic*)}]
\item \emph{substrate-native}: trained on and reasoning over the domain's own quantitative record in its native form (tabular, relational, temporal, high-dimensional), without an intervening compression through language;
\item \emph{structurally explicit}: holding its representation of the domain as an explicit, inspectable object---entities, their relations and their geometry---over which it reasons, so that what the model believes the domain \emph{is} can be examined separately from what it predicts;
\item \emph{lineage-complete} in the sense of Definition~\ref{def:lineage}; and
\item \emph{uncertainty-calibrated} in the sense of Definition~\ref{def:calib}, with an explicit abstention state.
\end{enumerate}
It is \emph{foundation-class} in that it is trained across the breadth of the domain's record rather than for a single task, such that individual tasks become conditioned inferences over a structure the model already holds.
\end{definition}

Each property answers a specific failure of the language substrate identified in Section~\ref{sec:substrate}; the definition is the negative argument stated positively. Two of the properties deserve comment because they are the ones most easily lost.

Structural explicitness (P2) is a deliberate departure from the end-to-end orthodoxy of contemporary deep learning, and we make it with our eyes open to the cost. An explicit representation constrains what the model can learn. We hold, with \citet{rudin2019}, that in consequential settings the constraint is worth paying for, because an inspectable representation is what converts ``show your reasoning'' from a research project into a query. We also observe that explicitness does not preclude learning: a representation can be explicit in its structure and learned in its parameters, as a graph whose vertices are sets of records and whose annotations are learned functions of them.

Lineage-completeness (P3) is stronger than it may appear, because it must survive every transformation, including the ones that make a system useful: aggregation, projection, dimension reduction, learned encoding. A pipeline that preserves provenance through ingestion and loses it at the first learned layer is not lineage-complete. The design consequence, which the instantiation in Section~\ref{sec:instantiation} takes seriously, is that the explicit structure of (P2) must be built by operations whose inverse images are computable, and learned components must consume that structure rather than replace it.

\subsection{Neighbouring classes}

Table~\ref{tab:classes} positions the class against its nearest relatives. The comparisons are at the level of the class's defining commitments; individual systems may exceed their class.

\begin{table}[t]
\centering\small
\caption{Model classes against the four properties. ``By construction'' means the property follows from the class's defining commitments; ``by addition'' means it can be supplied by external machinery; ``no'' means the class's commitments preclude it.}
\label{tab:classes}
\begin{tabular}{P{3.6cm}cccc}
\toprule
\textbf{Class} & \textbf{Substrate-native} & \textbf{Structurally explicit} & \textbf{Lineage-complete} & \textbf{Calibrated + abstention} \\
\midrule
Large language model & no & no & no & by addition \\
LLM fine-tuned on serialized tables & partial & no & no & by addition \\
Tabular / time-series foundation model & yes & no & no & by addition \\
World model (physical) & yes & partial & no & by addition \\
Classical single-task quantitative model & yes & yes & often & often \\
Large Quantitative Model & yes & yes & by construction & by construction \\
\bottomrule
\end{tabular}
\end{table}

The row for classical quantitative models is the one that most needs defending, because it invites the response that the class we propose is quantitative practice with new branding. The difference is the difference between a phrase-book and a language model. Classical quantitative models are narrow, hand-specified, single-task artefacts: one model prices one product, one scorecard rates one segment, each built and validated by hand. They are frequently explicit, reproducible and well-provenanced precisely because they are small. A Large Quantitative Model inherits that lineage and extends it to foundation scale: trained across the breadth of a domain's record, holding a structural representation of the whole domain, such that individual tasks are conditioned inferences rather than bespoke builds. What made language models an era rather than a paper was a repeatable production recipe; the corresponding recipe for the quantitative class is what Section~\ref{sec:instantiation} sketches.

The row for tabular and time-series foundation models is the one closest to us in spirit. They share (P1) and are the strongest existing evidence that (P1) matters. They do not, as a class, commit to (P2) or (P3): their representation of the data is in the weights of a transformer, and provenance from a prediction to the training rows that shaped it is an attribution problem. We regard them as an essential component of the class we describe rather than as competitors to it.

\subsection{Composition, not monolith}

A domain has many quantitative decisions---pricing, reserving, fraud, capital, allocation, surveillance in insurance; triage, dosing, discharge and resource allocation in acute care; detection, investigation and containment in network defence. We do not believe the right end state is a single monolithic model. The natural architecture is many foundational models, each trained on its slice of the domain's record and each satisfying (P1)--(P4), composed and orchestrated by language models at the interface. The composable class is the right unit; the monolith is the wrong one.

\section{One instantiation: refine, structure, learn}
\label{sec:instantiation}

Everything in this section is offered as an existence argument, not as part of the definition. Definition~\ref{def:lqm} says what a Large Quantitative Model is; what follows is one way to build one, which the authors have pursued in insurance and in cybersecurity and which one of us has developed in its representational aspects in drug discovery and emergency response \citep{chen2010,kulkarni2016}. There are surely others. We describe the approach at the level of what each stage contributes and why it is necessary for the four properties, not at the level of specific operators, which are reserved for separate treatment.

\paragraph{Why the record is not trainable as found.} The immediate objection to any foundation model for a quantitative domain is practical, and it is the right objection: the record is fragmented across many systems, schemas and definitional regimes; it is private and regulated; its semantics live in the heads of analysts. Text had the enormous advantage of being self-describing and ambient. This is precisely why Large Quantitative Models have not emerged the way language models did: the binding constraint was never modelling talent or compute but the absence of an industrial process for converting a domain's record into a trainable substrate with lineage intact. Any credible instantiation must therefore begin with data engineering treated as a first-class research problem, not a preliminary.

\subsection{Refine: from record to lineage-complete substrate}

The first stage converts heterogeneous source systems into entity-level analytical data with three properties: exhaustive rather than sampled profiling of every field and value; privacy policy enforced at the column level during transformation rather than by post-hoc redaction; and a provenance graph from every output value back to every contributing source field, in the sense of \citet{cheney2009}. The design principle is that lineage is laid down at the first touch of the data and preserved thereafter, because it cannot be reconstructed later. In our experience this stage is where most deployments of learned systems in enterprises fail \citep{sculley2015}, and it is the stage most often treated as beneath the notice of the modelling community. We think that is a mistake of priorities, and we say so in Section~\ref{sec:agenda}.

\subsection{Structure: an explicit representation with computable inverse images}

The second stage builds the explicit representation of (P2) in a way that delivers (P3) as a by-product. Treat the domain's entities at time $t$ as a finite sample $X_t\subset\mathbb{R}^p$ with a metric appropriate to the feature mix. The construction we have found most serviceable is a development of the Mapper method of \citet{singh2007} within the topological-data-analysis programme of \citet{carlsson2009}: project through a low-dimensional lens, cover the lens image with overlapping regions at a chosen resolution and overlap, reduce the pullback of each region locally, and glue the local pieces into a global complex along their shared entities. Three features of this construction matter for the position, and none of them depends on the particular operators used.

First, the representation is built from the entities' \emph{descriptors} only. Response variables---loss, outcome, default, compromise---are then annotated onto the structure as \emph{contrast} afterwards, with estimators corrected for the overlap between neighbouring regions and gated by an effective-sample-size criterion below which the system abstains. Keeping structure and outcome separate is what makes the subsequent observation that outcomes are organized over the structure a testable claim rather than a circularity. Second, every element of the structure has a computable inverse image: a vertex \emph{is} a set of named records. Lineage in the sense of Definition~\ref{def:lineage} is therefore not an added feature but the inverse-image map of the construction itself. Third, the construction is multiscale and its parameters are reportable, so the same record can be presented coarsely for a portfolio-level view and finely for an entity-level one without re-modelling; stability of the resulting structure under perturbation of the input has been studied for the one-dimensional case \citep{carriere2018} and for multiscale covers \citep{dey2016}, and we are explicit that stability guarantees for general lens dimensions remain partial (Section~\ref{sec:agenda}).

We stress what we do not claim. We do not claim that the computed complex recovers the homotopy type of any underlying continuous object; the entities are a finite sample and the guarantees available from nerve constructions apply to covers of spaces, not of samples. Our use of topology is as an organizing device for local-to-global summarization with computable provenance, and the claims we make about it are the claims that survive that reading.

\subsection{Learn: dynamics over the structure, with the language model at the interface}

The third stage learns over the time-indexed family $(K_t,\varphi_t)_{t\ge1}$ of structures and their contrast annotations. The organizing principle is \emph{structure first, then outcome}: predict how the structure evolves, then predict outcomes conditioned on the structure, rather than predicting outcomes directly from pooled features. The justification we can offer for this factorization is modest and we state it as such.

\begin{proposition}[Conditioning on observed structure does not increase expected uncertainty]
\label{prop:structure}
For any response $Y$, observed structure $K$ and feature information $F$ with finite second moments,
\[
\Ent(Y\mid K,F)\;\le\;\Ent(Y\mid F)
\qquad\text{and}\qquad
\E\big[\Var(Y\mid K,F)\big]\;\le\;\E\big[\Var(Y\mid F)\big].
\]
\end{proposition}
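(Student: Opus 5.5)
The plan is to prove the two inequalities separately, since each is a standard fact in its own framework. The entropy inequality is the statement that conditioning reduces entropy. The variance inequality is the tower property combined with the law of total variance, applied conditionally on $F$. Neither part needs anything specific to the construction of $K$; both hold for any jointly distributed triple $(Y,K,F)$. This generality is why the paper frames the proposition as a modest justification.

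For the entropy part, I would write
\[
\Ent(Y\mid F)-\Ent(Y\mid K,F)=\MI(Y;K\mid F)\ge 0 .
\]
Conditional mutual information is non-negative because it is an expectation, over $F$, of a Kullback--Leibler divergence between $P_{Y,K\mid F}$ and $P_{Y\mid F}P_{K\mid F}$ \citep{cover2006}. I would note the setting in which this is meaningful. It is immediate for discrete $Y$. For continuous $Y$ with differential entropy, the same identity holds whenever the quantities are finite. If $Y$ is continuous, I would add a one-line remark that the statement is read in whichever of the two senses applies.

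For the variance part, let $\mathcal{G}=\sigma(F)\subseteq\mathcal{H}=\sigma(K,F)$. The finite second moment of $Y$ ensures all the conditional expectations involved are $L^2$ objects. The conditional law of total variance gives
\[
\Var(Y\mid F)=\E\big[\Var(Y\mid K,F)\,\big|\,F\big]+\Var\big(\E[Y\mid K,F]\,\big|\,F\big).
\]
The second term is non-negative. Taking expectations of both sides and using the tower property then yields $\E[\Var(Y\mid K,F)]\le\E[\Var(Y\mid F)]$.

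Equivalently, one can argue by $L^2$ projection. We have $\E[\Var(Y\mid\mathcal{G})]=\E[(Y-\E[Y\mid\mathcal{G}])^2]$, and $\E[Y\mid\mathcal{H}]$ is the $L^2$-closest $\mathcal{H}$-measurable variable to $Y$. Since every $\mathcal{G}$-measurable variable is also $\mathcal{H}$-measurable, the $\mathcal{H}$-projection error is at most the $\mathcal{G}$-projection error. I would present the projection argument as the main line, because it makes the role of the nesting $\sigma(F)\subseteq\sigma(K,F)$ transparent.

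The only real obstacle is interpretive rather than technical. The inequalities concern \emph{population} conditional quantities, so they say nothing about finite-sample estimators: adding $K$ can still increase estimation variance. I would therefore close the proof with a remark that the inequality is an ``in expectation, at the population level'' statement, and that equality holds exactly when $Y$ is conditionally independent of $K$ given $F$ (for the entropy part), or when $\E[Y\mid K,F]=\E[Y\mid F]$ almost surely (for the variance part).
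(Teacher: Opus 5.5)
Your proposal is correct and follows the paper's proof. The entropy part is the conditioning-reduces-entropy fact, which you obtain via $\MI(Y;K\mid F)\ge 0$, and the variance part is exactly the paper's conditional law of total variance within levels of $F$, followed by taking expectations. Your $L^2$-projection line is the same argument in Pythagorean form, since $\E[\Var(Y\mid F)]-\E[\Var(Y\mid K,F)]=\E\big[(\E[Y\mid K,F]-\E[Y\mid F])^2\big]$, and this identity also gives your stated equality condition.
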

\begin{proof}
The entropy inequality is the statement that conditioning cannot increase conditional entropy \citep[Thm.~2.6.5]{cover2006}. The variance inequality is the law of total variance applied within each level of $F$: $\Var(Y\mid F)=\E[\Var(Y\mid K,F)\mid F]+\Var(\E[Y\mid K,F]\mid F)\ge\E[\Var(Y\mid K,F)\mid F]$; take expectations over $F$.
\end{proof}

\begin{remark}
Proposition~\ref{prop:structure} is a statement about \emph{observed} structure. When the structure at a future horizon is itself predicted, the inequality bounds what the factorization could deliver if the prediction were perfect; it does not guarantee the factorization helps. Whether and when it does is an empirical and theoretical question that we list as open in Section~\ref{sec:agenda}. We regard stating this limit as more useful to the community than overclaiming.
\end{remark}

Calibration (P4) is supplied at this stage by mechanisms that are public and well understood: probability calibration on held-out data \citep{guo2017}, and distribution-free prediction intervals via conformal methods \citep{vovk2005,angelopoulos2023}, chosen because they make no distributional assumption---important under heavy tails---and because under distribution shift they degrade honestly, widening rather than silently losing coverage, with adaptive variants available for streaming operation \citep{tibshirani2019,gibbs2021}. Where a forecast is conditioned on predicted rather than observed structure, the two are calibrated separately and the gap is reported. Together with the abstention rule inherited from the structure stage, the system has three epistemic states rather than one: a value with an interval, a value flagged as poorly calibrated under current shift, or a refusal to report. The third state is the one most often missing from analytical software, and its absence is the source of much of the distrust that quantitative systems attract from the experts who must act on them.

Finally, the language model. In this instantiation it parses intent, selects the appropriate view of the structure, and narrates the computed result---with the narration constrained to the computed object and every quantitative claim in it carrying the identifiers of the records it summarizes. A narrated number whose provenance is a computation over the structure is auditable; a number produced \emph{by} narration is not. That is the entire division of labour, and it is the one Section~\ref{sec:evidence} shows working.

\section{Evidence: decoupling the language model from the decision}
\label{sec:evidence}

The position predicts that in a consequential domain with a structured quantitative state, confining the language model to the interface and placing the decision in a model of the structure will improve both reliability and auditability. We have one deployed test of this prediction that we can report in full.

\citet{vallabhaneni2026} describe an agentic security-operations architecture for detecting and containing lateral movement, evaluated on the Los Alamos National Laboratory multi-source cyber-security events dataset, which spans 58 days of production traffic with verified red-team activity. The state of the problem is an authentication graph of roughly $1.8\times10^4$ hosts and $2.4\times10^7$ daily edges---an object that no current context window can hold, and one whose relational structure is exactly what a serialized prompt discards. The architecture decouples topological from semantic reasoning: a heterogeneous graph attention encoder summarizes the live two-hop neighbourhood of a flagged host into a fixed-dimensional state; a reinforcement-learning policy maps that state to one of five constrained investigative actions; and the language model is restricted to consuming the policy's recommendation and producing an analyst-readable narrative, gated by a critic that verifies cited evidence and by a human-approval boundary for any irreversible action.

Three findings bear directly on the position. First, on held-out red-team events the structured policy achieved precision $0.91\pm0.02$ and recall $0.87\pm0.03$, the highest precision among the graph-based lateral-movement detectors compared, at recall within the range of the strongest of them. Second, the decision itself---subgraph extraction, encoding and policy inference---executed in under 100~milliseconds, while the language model's narrative synthesis accounted for 60\% of the 6.3-second median end-to-end latency: the language model was the slowest and least essential component of the loop, and the one whose output was gated rather than trusted. Third, the auditability that governance requires came from the structured side: an immutable ledger of network state, action, policy version and cited evidence for every decision, action masking that forbids a verdict without corroborating evidence from at least two sources, and tiered autonomy in which read-only reconnaissance is automatic, reversible containment requires the critic's validation, and irreversible intervention requires a human. None of these properties was supplied by the language model; all were supplied by the explicit structure and the constrained policy over it.

We do not present this as a demonstration of a Large Quantitative Model in the full sense of Definition~\ref{def:lqm}: the system is single-task, and lineage from the policy's decision to source authentication events runs through a learned encoder rather than a computable inverse image. We present it as evidence for the division of labour the position rests on, in a domain where the alternative---a language model as drop-in analyst---has been widely proposed, and as an illustration of how much of what governance asks for arrives once the decision is placed on a structured quantitative substrate.

The same pattern is visible, at lower resolution, in the other domains one of us has worked in. In chemogenomics, integrating heterogeneous chemical, biological and clinical evidence into a single explicit structure with inspectable provenance is what made inference over sparse activity data defensible enough to carry through to experimentally confirmed candidates \citep{chen2010,kulkarni2016}. In emergency management, the operative constraint is that a commander must be able to see why a system recommends what it recommends, under time pressure, from data that is incomplete; a structured, provenance-preserving representation is what puts that ``why'' within reach. We claim no more for these than that they are consistent with the position.

\section{Alternative views}
\label{sec:objections}

\paragraph{``Scaling will fix it.''} The most common rebuttal is that language-model limitations in quantitative domains are temporary. The answer is Proposition~\ref{prop:irrecoverability}: the error bound is a function of the description and the query, not of the model. Scale recovers more of what is in the training distribution; it cannot recover what was never encoded. The empirical record agrees, in that each model generation has improved fluency about quantitative domains faster than fidelity to them, which is what one predicts when the substrate is description. And notably, the researchers with the most privileged view of scaling are among those redirecting effort toward non-language substrates for physical reasoning \citep{lecun2022}; they are not betting against scale, but against scale on the wrong data.

\paragraph{``Fine-tune the language model on the domain's data.''} Fine-tuning adjusts a language model's behaviour; it does not change its substrate. A transformer pre-trained on text and fine-tuned on serialized tables is still a next-token predictor whose inductive biases were formed on prose, and the deeper problems survive untouched: outputs remain sampled, lineage remains absent, and the representation of the domain remains in weights no validator can inspect. The tabular foundation-model literature reaches the same conclusion from the empirical side---architectures and objectives must be designed for the statistical character of the data rather than inherited from language \citep{grinsztajn2022,vanbreugel2024}. Fine-tuning is a patch on the interface; the position concerns the foundation.

\paragraph{``Give the language model tools.''} This is the strongest contemporary objection, and we want to state it fairly. Tool use \citep{schick2023}, program-aided reasoning \citep{gao2023pal} and retrieval \citep{lewis2020rag} let a language model delegate computation and look up facts, so that the model need not itself be the substrate for quantitative reasoning. We agree, and we observe that this is our position: the moment the arithmetic is delegated to a calculator and the data to a database, the reasoning core is the tool, and the language model is the interface. What remains contested is the design of the tool. If the ``tool'' is a query against an explicit, provenance-preserving representation of the domain with calibrated outputs, it is a Large Quantitative Model by another name and we have no disagreement. If the tool is a code interpreter executing code the language model wrote, then the reproducibility and lineage of the pipeline are bounded by the reproducibility and lineage of the generation step, which is to say they are not guaranteed. Tool use moves the problem to exactly the place we say it belongs; it does not solve it there.

\paragraph{``Quantitative models already exist; this is quant with new branding.''} Section~\ref{sec:lqm} answers this at length. The short form: classical quantitative models are single-task and hand-built; the class we propose is foundation-scale, holds a representation of the whole domain, and treats tasks as conditioned inferences. The difference is not one of degree.

\paragraph{``Explicit structure sacrifices accuracy.''} This is the objection we take most seriously on scientific grounds, because it may sometimes be true. End-to-end learning over unconstrained representations has repeatedly outperformed structured alternatives when the objective is a single scalar metric on a benchmark. Two responses. First, in consequential settings the objective is not a single scalar; it is accuracy jointly with reproducibility, lineage and calibration, and Section~\ref{sec:governance} argues the latter three are hard constraints. \citet{rudin2019} has made the case that the supposed accuracy--interpretability trade-off is often smaller than assumed when the interpretable model is designed rather than defaulted to; we find the same in practice. Second, structural explicitness in our sense constrains the \emph{representation}, not the learning over it: the annotations and dynamics over an explicit structure can be as expressive as the problem demands. We nonetheless list the characterization of this trade-off as an open problem, because a position paper should be candid about where its position may cost something.

\paragraph{``The data problem is disqualifying.''} Text could be scraped; an insurer's claims history, a hospital's encounters and a network's authentication logs cannot and should not be. This is the objection that is fatal for most would-be builders. But it asserts not that the class is impossible, only that it is impossible without first solving autonomous refinement, privacy enforcement and lineage preservation. That is a statement about sequencing, and it explains why data engineering is the first stage of Section~\ref{sec:instantiation} rather than an afterthought. The fragmentation of quantitative data is not a refutation of the position. It is the reason the class has not emerged spontaneously, and the reason it will be built by those who treat data engineering as seriously as model architecture.

\section{A research agenda}
\label{sec:agenda}

If the position is right, the community's effort is misallocated, and the correction is not to abandon language models but to build the class that sits beneath them. We propose the following, in rough order of foundational priority.

\begin{enumerate}
\item \textbf{A theory of representational adequacy for quantitative domains.} Definition~\ref{def:fidelity} makes fidelity relative to a query family. The natural theoretical programme is to characterize, for a given domain, the minimal representations that are fidelity-preserving for the decision-relevant queries, and to quantify the information loss of standard summarizations---including the linguistic ones that populate every training corpus. Proposition~\ref{prop:irrecoverability} is the first, trivial step.

\item \textbf{Lineage as a first-class learning constraint.} Provenance is well developed for databases \citep{buneman2001,cheney2009} and largely absent from learned systems. We need architectures in which learned components consume explicit structures with computable inverse images and preserve them, and formal notions of \emph{lineage-preserving} layers analogous to equivariance constraints. Where lineage must pass through a learned encoder, as in Section~\ref{sec:evidence}, we need principled bounds on what is lost.

\item \textbf{Structure-first learning under predicted structure.} Proposition~\ref{prop:structure} covers observed structure. The important case is predicted structure at a horizon, where the structure itself carries model uncertainty. When does the factorization $p(K_{t+h}\mid F)\,p(Y\mid K_{t+h},F)$ improve on direct prediction, and by how much? This is a question about model-based prediction in a discrete, combinatorial state space with births and deaths of structural elements, and it is largely open.

\item \textbf{Stability of explicit topological representations.} Stability of Mapper-type constructions is understood in one dimension \citep{carriere2018} and for multiscale covers \citep{dey2016}; general results for the lens dimensions and heterogeneous local reducers used in practice are not available, and their absence is the weakest point in the instantiation of Section~\ref{sec:instantiation}. Governance regimes that require stability analysis of internal models make this more than a theoretical nicety.

\item \textbf{Calibration and abstention under drift, jointly.} Adaptive conformal methods \citep{gibbs2021} address coverage under shift; abstention rules address thin evidence. Consequential deployment needs both, with guarantees that compose, and with the calibration gap between observed-structure and predicted-structure conditioning characterized rather than merely reported.

\item \textbf{Benchmarks that score the four properties.} Every benchmark that drives the field scores outcomes. None we know of scores reproducibility, lineage completeness or calibrated abstention as first-class quantities. Until they do, systems that satisfy them by construction will look no better on leaderboards than systems that do not, and the incentive to build them will remain commercial and regulatory rather than scientific.

\item \textbf{Interface protocols between language and quantitative models.} The division of labour we advocate needs a specification: how intent is translated into a well-formed query against an explicit structure; how a narration is constrained to the computed object; how every quantitative claim in generated text carries the identifiers that make it auditable. This is a research problem in its own right and the one where the language-model community has most to contribute.

\item \textbf{Autonomous refinement as a research topic.} The conversion of fragmented, private, semantically undocumented records into lineage-complete training substrate is treated as engineering. It is the binding constraint on the entire class, and it deserves the attention the field gives to architecture.
\end{enumerate}

\section{Conclusion}

Every major class of machine intelligence has been defined by its substrate. Language models, trained on text, mastered the world of discourse. World models, trained on physical observation, are learning the world of space and matter. The quantitative core of consequential decision-making---where most of the economy's decisions and many of its most serious ones are made, under institutions that will accept nothing less than models that can show their work from the source record forward---has been waiting for its model class. We have argued that language cannot be that substrate, for reasons that are structural and elementary; that the properties governance demands are natural to a model built on the quantitative record and unnatural to one built on language; that the corresponding class can be defined by four properties and distinguished from its neighbours; and that it is buildable, with a first instantiation in hand and evidence that its central architectural commitment---the language model at the interface, a structured quantitative model at the core---delivers what the position predicts.

A language model is a statistical model of how people describe the world, not of the world itself. Models are what they eat. For the quantitative core of consequential decisions, the right diet was never language. It was the record all along, waiting for a model class capable of refining it, giving it structure and learning from it.

\section*{Acknowledgements}
The authors thank colleagues at Indiana University and at Duo Dimensio for discussions that shaped this argument. \emph{[To be completed: funding acknowledgements; the Indiana University partnership; any collaborator credits.]}

\section*{Competing interests}
R.V., D.I. and D.J.W. are affiliated with Duo Dimensio LLC, which develops systems of the kind described in Section~\ref{sec:instantiation}. D.J.W. is additionally a faculty member at Indiana University. The position advanced here is the authors' own.

\small
\bibliographystyle{plainnat}
\bibliography{references}

\end{document}